\newtheorem{problem}{problem}
\newtheorem{theorem}{theorem}
\begin{document}

\title{Adversarial Attack on Hierarchical \\ Graph Pooling Neural Networks}

%\author{\IEEEauthorblockN{Haoteng}
%\thanks{$^*$This work was done before the author joined Intel.}
%\IEEEauthorblockA{\textit{Department of Electrical and Computer Engineering} \\
%\textit{University of Pittsburgh}\\
%Pittsburgh, PA, USA\\
%hat64@pitt.edu\\[0.1cm]
%}
%\IEEEauthorblockN{Guixiang Ma$^*$}
%\IEEEauthorblockA{\textit{Intel Labs} \\
%Hillsboro, OR, USA\\
%guixiang.ma@intel.com}
%
%\and
%\IEEEauthorblockN{Yurong Chen}
%\IEEEauthorblockA{\textit{Department of Electrical and Computer Engineering} \\
%\textit{University of Pittsburgh}\\
%Pittsburgh, PA, USA\\
%yuc127@pitt.edu\\[0.5cm]
%}
%\IEEEauthorblockN{Lei Guo}
%\IEEEauthorblockA{\textit{Department of Electrical and Computer Engineering} \\
%\textit{University of Pittsburgh}\\
%Pittsburgh, PA, USA\\
%lei.guo@pitt.edu\\[0.5cm]
%}
%\and
%\IEEEauthorblockN{Wei Wang}
%\IEEEauthorblockA{\textit{Department of Industrial Engineering} \\
%\textit{University of Pittsburgh}\\
%Pittsburgh, PA, USA\\
%w.wei@pitt.edu\\[0.5cm]
%}
%\IEEEauthorblockN{Bo Zeng}
%\IEEEauthorblockA{\textit{Department of Industrial Engineering} \\
%\textit{University of Pittsburgh}\\
%Pittsburgh, PA, USA\\
%bzeng@pitt.edu\\[0.5cm]
%}
%\IEEEauthorblockN{Lian Zhan}
%\IEEEauthorblockA{\textit{Department of Electrical and Computer Engineering} \\
%\textit{University of Pittsburgh}\\
%Pittsburgh, PA, USA\\
%liang.zhan@pitt.edu\\[0.5cm]
%}
%}

\author{Haoteng~Tang,      Guixiang~Ma,     Yurong~Chen,       Lei~Guo,        Wei~Wang,  Bo~Zeng, Liang~Zhan% <-this % stops a space

\thanks{H. Tang, Y. Chen, L. Guo and L. Zhan are with the Department
of Electrical and Computer Engineering, University of Pittsburgh, Pittsburgh,
PA, 15260 USA (e-mail: \{hat64, yuc127, Lei.guo, liang.zhan\}@pitt.edu)}% <-this % stops a space
\thanks{G. Ma is with Intel Labs, Hillsboro, USA (e-mail:guixiang.ma@intel.com). *This work is done before G.Ma joined Intel.}% <-this % stops a space
\thanks{B. Zeng and W. Wang are with the Department
of Industrial Engineering, University of Pittsburgh, Pittsburgh,
PA, 15260 USA (e-mail: \{bzeng, w.wei\}@pitt.edu)}% <-this % stops a space
}

\maketitle

\begin{abstract}
Recent years have witnessed the emergence and development of graph neural networks (GNNs), which have been shown as a powerful approach for graph representation learning in many tasks, such as node classification and graph classification. The research on the robustness of these models has also started to attract attentions in the machine learning field. However, most of the existing work in this area focus on the GNNs for node-level tasks, while little work has been done to study the robustness of the GNNs for the graph classification task. In this paper, we aim to explore the vulnerability of the Hierarchical Graph Pooling (HGP) Neural Networks, which are advanced GNNs that perform very well in the graph classification in terms of prediction accuracy. We propose an adversarial attack framework for this task. Specifically, we design a surrogate model that consists of convolutional and pooling operators to generate adversarial samples to fool the hierarchical GNN-based graph classification models. We set the preserved nodes by the pooling operator as our attack targets, and then we perturb the attack targets slightly to fool the pooling operator in hierarchical GNNs so that they will select the wrong nodes to preserve. We show the adversarial samples generated from multiple datasets by our surrogate model have enough transferability to attack current state-of-art graph classification models. Furthermore, we conduct the robust train on the target models and demonstrate that the retrained graph classification models are able to better defend against the attack from the adversarial samples. To the best of our knowledge, this is the first work on the adversarial attack against hierarchical GNN-based graph classification models.

\end{abstract}

\begin{IEEEkeywords}
Adversarial attacks, Hierarchical GNNs, Graph Pooling, Graph Classification
\end{IEEEkeywords}

\section{Introduction}
% The very first letter is a 2 line initial drop letter followed
% by the rest of the first word in caps.
% 
% form to use if the first word consists of a single letter:
% \IEEEPARstart{A}{demo} file is ....
% 
% form to use if you need the single drop letter followed by
% normal text (unknown if ever used by the IEEE):
% \IEEEPARstart{A}{}demo file is ....
% 
% Some journals put the first two words in caps:
% \IEEEPARstart{T}{his demo} file is ....
% 
% Here we have the typical use of a "T" for an initial drop letter
% and "HIS" in caps to complete the first word.
\IEEEPARstart{I}{n} recent years, deep convolutional neural network has shown its outstanding performance in a variety of machine learning tasks. For example, the Convolutional Neural Network (CNN) has been widely used for image classification~\cite{he2016deep,szegedy2015going} and object detection~\cite{girshick2015fast}. Graph convolutional neural network (GCN), as a generalized CNN for the non-grid-like graph data, has also emerged as a powerful approach for the graph representation learning in many tasks, such as node classification~\cite{fey2018splinecnn,sun2019adagcn,wu2019simplifying,gao2018large} and edge prediction~\cite{pinter2018predicting,nathani2019learning}. There are mainly two categories of graph convolutional neural networks: spatial GCNs and spectral GCNs. The spectral GCNs take the adjacency matrix of a graph and the node feature vectors as input, and perform the convolutions based on graph Fourier Transform~\cite{kipf2016semi,henaff2015deep,defferrard2016convolutional}, while the spatial GCNs aggregates the node representations from its neighborhood~\cite{micheli2009neural,atwood2016diffusion,gilmer2017neural}.

%For example, ~\cite{kipf2016semi} shows that the node classification accuracy was improved a lot comparing to the previous methods such as deepWalk \cite{perozzi2014deepwalk}, and ~\cite{defferrard2016convolutional} shows the advantage of the GCNs for graph classification compared to the traditional methods.

% \textcolor{blue}{Notes: 1. the emergence and advantage of Hierarchical graph pooling; 2. the importance of model robustness --> adversarial attack, current work (cite some works) involves basic GNNs and for node-classification problem 3. We study the adversarial attack on the more advanced hierarchical GCNs for graph classification task. 4. Challenges. 5. Proposed solutions and contributions}

Recently, hierarchical Graph Neural Networks (GNNs), as an advanced version of GCNs have been proposed for better capturing the hierarchical structure of graphs, and they have shown great advantage compared to the original GCNs for the graph-level learning tasks, such as graph classification and graph similarity analysis \cite{ma2019graph,ying2018hierarchical,lee2019self,gao2019graph,ma2019deep,ma2019survey}. For instance, in \cite{zhang2019hierarchical}, GNNs with hierarchical graph pooling is proposed and applied for the graph classification task, where the graph pooling operation adaptively selects a subset of nodes to form an induced subgraph, and a structure learning mechanism is introduced for preserving the integrity of graph topological information. In \cite{wang2019heterogeneous}, hierarchical GNNs are developed for the similarity learning between graphs for unknown malware detection.      

While the GNNs have made a great progress on the graph-related tasks, the evaluation of the robustness of these neural networks also becomes an important topic. Some recent works have started to study the vulnerability of GNNs in node classification tasks and have shown that, %like the scenario in the computer vision works \cite{}, 
these models can be attacked by the adversarial samples \cite{zugner2019adversarial,dai2018adversarial,bojchevski2018adversarial,ma2019attacking,xu2019topology,wu2019adversarial,wang2019attacking,chang2020restricted}. That is to say, by introducing an unnoticeable perturbation from the original graph, we can easily fool the GNNs to make a wrong prediction result. For instance, Daniel Zugner et al. \cite{zugner2018adversarial} design a surrogate model to attack the GCN classification framework for a specific node. Their model conducts a greedy search to identify those attack candidate nodes, thereby generating the adversarial graph by perturbing the candidate nodes' features and the connections between the candidate node and the target node. After the attack, The GCN classifier will make a wrong classification to the specific target. Dai Hanjun et al. \cite{dai2018adversarial} provide another approach. They model the attack procedure as a Finite Horizon Markow Decision Process and build up a reinforcement learning framework to generate the adversarial samples for attacking and evaluating the robustness of a family of GCNs. 

Despite the current work has made some progress in the adversarial attack on GNNs, most of the attack tasks are on the node level (e.g. node classification). The whole graph classification is another valuable topic in various application domains, such as the protein pattern classification based on the amino acid networks, and the malware detection \cite{wang2019heterogeneous}, etc. Many recent studies \cite{du2019graph,gao2019graph,ying2018hierarchical} have started working on the graph classification task using GNNs, in particular the hierarchical GNNs which has been shown to be more powerful for graph classification tasks. However, few studies have been conducted on the robustness of these graph classification neural network models. 

In this paper, we focus on the adversarial attack on the hierarchical GCNs for graph classification tasks. Specifically, a new adversarial graph generating strategy with the surrogate model is proposed. We take the-state-of-art hierarchical graph pooling (HGP) models as instantiations of the hierarchical GCNs and build up an adversarial attack framework to explore the vulnerability of the HGP models to the adversarial samples. There are three main challenges in this topic. 
\begin{itemize}
    \item How to determine an effective attack target set of nodes and edges for the attacker? If we randomly perturb one or a few nodes/edges, the graph classification results may not change because such a perturbation may not influence or destroy the graph intrinsic structure that is discriminative for the graph classification task. 
    \item How to design the surrogate model to generate effective adversarial samples and fool the graph pooling neural networks? Since there are extensive matrix multiplication operations and non-linear components in the GNNs' loss function, especially in the pooling process, how to address or avoid the high-cost issue in the computation of the gradient of the entire loss with respect to each candidate entry to be added/deleted is also a big challenge in designing surrogate models.
    \item The robustness is always an important factor to evaluate the performance of the models. Under the scenario of adversarial attacks, how to improve the robustness of the hierarchical GNNs based graph classification models? 
\end{itemize}

To address the aforementioned challenges, we propose a surrogate model consisting of convolution and pooling operators. We use this surrogate model in generating the adversarial samples to attack the target HGP models. In the meantime, we conduct a robust train on the target models in order to improve the robustness of these hierarchical graph pooling neural networks. Our contributions can be summarized as follows: 

\begin{itemize}
    \item We propose an adversarial attack framework to evaluate the vulnerability of the hierarchical GCNs in graph classification tasks. To the best of our knowledge, this is the first work on the adversarial attack against hierarchical GCN based graph classification models. 
    \item We design a surrogate model that consists of convolutional and pooling operators to generate adversarial samples to fool the hierarchical GCN based graph classification models. Specifically, considering the fact that the hierarchical pooling operators in the hierarchical GCN classification models tend to preserve the nodes that are more important for the graph classification (i.e., nodes with higher scores computed in the pooling layer), we set the preserved nodes by the pooling operator as our attack targets, and then perturb the attack targets slightly to fool the pooling operator in hierarchical GCNs so that wrong nodes will be preserved. %%they will wrongly select the nodes to preserve. 
    \item We use the gradient-based method to identify the candidate edges to be added or deleted. To simplify the computation and reduce the cost of calculating the gradient of the whole network loss with respect to each candidate entry, we propose a new loss function based on the pooling score function, where we try to minimize the attack target node's pooling score by adding/deleting an edge connecting to the attack target node.
    \item We evaluate the adversarial samples generated from multiple datasets by our surrogate model and demonstrate its superior transferability in attacking the current state-of-art graph classification models. 
    \item We conduct the robust train and demonstrate that the retrained graph classification models are capable to defend against the attack from the adversarial samples.
\end{itemize}
The rest of this paper is organized as follows. Some related work will be discussed in the next section. The notations and preliminary knowledge are given in Section~\ref{sec_prelim}. Then we present the proposed framework in Section~ \ref{sec_model}. The experimental results and analysis are shown in Section~\ref{sec_exp}, followed by some additional evaluations in Section~\ref{sec_eval}. Finally, we conclude the paper in Section~\ref{sec_conclusion}. 

% \vspace{-1em}

% You must have at least 2 lines in the paragraph with the drop letter
% (should never be an issue)

\section{Related Work}
Adversarial machine learning work can date back to 2004, when Dalvi et al. \cite{dalvi2004adversarial} and Lowd and Meek \cite{lowd2005adversarial} showed that the linear classifiers of spam filter could be easily fooled by some elaborated modified spam emails \cite{biggio2018wild}. Since then, extensive works on adversarial attack have been done to develop attacks against machine learning models \cite{nelson2008exploiting,rubinstein2009antidote}, or to evaluate the robustness of the machine learning models under the adversarial attack \cite{barreno2010security}  for developing strategies to defense the attack \cite{klinkenberg2000detecting,globerson2006nightmare}. Most of the existing works in this area focus on image, speech and language domains, while related studies on graphs are currently still at an early stage \cite{sun2018adversarial}.

Based on the attack tasks, the existing adversarial attacks on graphs can be categorized as: node relevant attacks, edge relevant attacks, and graph relevant attack. Node relevant attacks usually generate adversarial samples by making an unnoticeable perturbation on the original graphs and these adversarial samples can disturb the node embedding \cite{chen2018fast,sun2018data} or node classification \cite{dai2018adversarial,zugner2018adversarial,xu2019topology,wu2019adversarial,wang2019attacking,chang2020restricted} process. Likely, some edge relevant attack works build up the attack algorithms to disturb the node embedding process and then disturb the link predictions \cite{chen2018link}. So far, very few work has been done on the graph level attack problem. In \cite{dai2018adversarial}, Hanjun Dai et al. first attempt to use reinforcement learning technique to address the attack problem on graph classification models. However, since their model is evaluated only on generated data, attacking the model trained on the synthetic graphs may not be generalizable to real-world graph data in various domains. And the efficiency of using reinforcement learning (RL) method is usually a problem too. 

There are mainly two types of strategies in the graph attacks: evasion attack, and poisoning attack. Evasion attack means that the parameters of trained model are fixed and the attacker tries to generate the adversarial samples from the trained fixed model. In \cite{dai2018adversarial}, evasion attacks are designed for both inductive learning setting and transductive learning setting, while most of other existing works \cite{chen2018fast,sun2018data,zugner2019adversarial} choose poisoning attacks, in which attacker tries to generate the adversarial samples on the training dataset. In this paper, we choose evasion attacks as our strategy.

\section{Preliminaries}
\label{sec_prelim}
\subsection{Graph Notation}
We consider the supervised graph classification problem on attributed graphs with different number of nodes. Each graph has a class label and each graph's node has a feature vector. Formally, let $G=(A,H)$ be any of attributed graph with $N$ nodes, where $A\in\{0,1\}^{N \times N}$ is the adjacency matrix and $H \in \mathcal{R}^{N \times D}$ represents the $N$ nodes' $D$ dimension features. ($H \in \{0,1\}^{N \times D}$ if the features are binarized).Since the node order will not affect the classification and attack work, therefore, $w.l.o.g.$, we assign the node-ids as: $i \in \{1,2,...,N\}$. Then let $h _{i}\in\{0,1\}^{1 \times D}$ be the node $i$'s feature vector and $L \in\{1,2,...,k\}$ be each graph's label, where $k$ is the number of classes.   
\subsection{Hierarchical Graph Pooling (HGP) Model}
HGP models are one kind of the state-of-art techniques to tackle the graph classification problem \cite{ma2019graph,zhang2019hierarchical}. Given an attribute graph $G=(A,H)$, and its corresponding class label $L$,
the goal of graph classification is to learn a function $f: G \rightarrow L$, which can map the input graph $G$ to the output label $L$. Noted that our work is set on the inductive learning scenario, which means the testing data never share any information in the training process.

The HGP models are usually comprised by convolution and pooling operations. The convolution operation aims to project the node features into a new space in which the node information and the relationships among nodes can be well preserved. The pooling operation aims to preserve $M (M<N)$ nodes who support and encode the entire graph structure. For the discarded nodes, their information will be aggregated into the preserved nodes. The pooling operation effectively solves the \textit{flatness} of the previous GCN which only propagate the information through the nodes and edges. Instead, the pooling method can aggregate the graph information in a hierarchical way \cite{ying2018hierarchical}.

The convolution operation in the graph pooling models always follow the previous GCN model \cite{kipf2016semi}. The only difference is that the non-linear activation function is used after pooling. Here, the output of the ($l+1$)-th convolutional layer is defined as:
\begin{equation}
H_{l+1}^{cov}=\Tilde{D}^{-\frac{1}{2}}_{l} \Tilde{A}^{pool}_{l} \Tilde{D}^{-\frac{1}{2}}_{l}H_{l}^{pool} W_{l},
\end{equation}

where $\Tilde{A}^{pool}_{l}=A^{pool}_{l} + I_{N}$ ($I_{N}$ is the self loop matrix). $\Tilde{D}_{l}$ is the degree matrix corresponding to $\Tilde{A}^{pool}_{l}$ and $H_{l}^{pool}$ is the node features propagated from the $l$-th pooling layer. Noted that the adjacency matrix $A$ changes in different convolutional layer since the pooling layer discards some nodes. $W_{l}$ is the trainable weight parameters of layer $l$. \\
\indent As for the pooling layer, different studies have proposed different strategies to identify nodes to be preserved\cite{lee2019self,zhang2019hierarchical}. Most of these studies define a score function to rank nodes and then preserve the nodes with high score. Here, we define a unified pooling function $P$ as the pooling operator. Formally, the ($l+1$)-th pooling layer is:
\begin{eqnarray}
A_{l+1}^{pool}, H_{l+1}^{pool}&=&P(A_{l}^{pool} H_{l+1}^{cov}, \theta_{l+1})\\
H_{l+1}^{pool}&=&\sigma({H_{l+1}^{pool}})
\end{eqnarray}
where $A_{l}^{pool}$ is the adjacency matrix generated by the $l$-th pooling layer.  $H_{l+1}^{cov}$ is the feature vector generated by the ($l+1$)-th convolution layer. $\theta_{l+1}$ is the parameters of the ($l+1$)-th pooling layer and $\sigma(\cdot)$ is the non-linear activation function (e.g. $ReLU$). \\
\indent At the end of HGP model, there are a couple of linear output layers. Here, we define a linear function $g$ to represent the linear layers and formalize the output $y$ of the HGP model as:
\begin{eqnarray}
y = softmax(g (H_{l+1}^{pool}, V^{linear}))
\end{eqnarray}
where $V^{linear}$ is the parameters of linear layers and the output $y$ is a probability map for each of the classes. Let $U=\{W,\theta,V^{linear}\}$ be the parameter set, the whole model loss function can be defined using the negative log likelihood:
\begin{eqnarray}
\mathcal{L}_{model}(U,A,X) = -\log(y[L]),
\end{eqnarray}
where $[\cdot]$ is the index operation and $L$ is the label. 
\subsection{Adversarial Attacks on Graph}
Generally speaking, adversarial attack problem is a bilevel optimization problem which includes the upper-level and lower-level loss function. Let $\hat{G}=(\hat{A},\hat{H})$ be the adversarial samples generated by slightly perturbing the original graph $G$. For the evasion attack, the parameters $U$ needs to be optimized to minimize the whole model loss, $\mathcal{L}_{model}$, in the model training stage. After training, the model is fixed and then the adversarial samples should maximize the attack loss, $\mathcal{L}_{attack}$ given the fixed model. Formally, the graph adversarial evasion attack problem can be depicted as:
\begin{eqnarray}
&\underset{\hat{G}\in\delta(G)}{\mathrm{max}}\sum_{j}{}{\mathcal{L}_{attack}(f(\hat{G}_{j},U^{*}),L_{j})} \nonumber \\
\text{s.t.} & U^{*}=\underset{U}{\mathop{\arg\min}} \sum_{j}{}{\mathcal{L}_{model}(f(G_{j},U),L_{j})}, 
\end{eqnarray}
where $j=1,2...$ is the graph-ids, and $\delta(\cdot)$ defines perturbation restrictions.

\section{Methodology}
\label{sec_model}
Comparing to the traditional GCN model containing only convolution operations, the $\mathcal{L}_{model}$ of the HGP model is more complicated because of nonlinear pooling operations (i.e. network nodes cut operation and score function integration operation, see more details in the Section 4.2) in the pooling layer. Since the $\mathcal{L}_{attack}$ in most of existed studies is based on or directly equals to the $\mathcal{L}_{model}$, therefore, it's difficult to optimize the upper-level function in Eq.$(6)$ by computing the gradient of $\mathcal{L}_{attack}$ to graph ($G$) due to the complexity and nonlinearity of the $\mathcal{L}_{attack}$. In this work, we propose a new solution to tackle this issue. The following section is organized into four parts. First part describes the workflow about the model attack and how to help the model in defending against the attack. Second part describes a new surrogate model. Third part explains how to generate the adversarial samples by attacking the pooling operation in the surrogate model using a gradient-based method. And the last part illustrates the details of gradient computation.

\begin{figure}[h]
\small
    \vspace{-0.0001in}
    \centering
    \raggedright
    \begin{minipage}[h]{0.45\linewidth}
    \centering
    \includegraphics[width=2.2\linewidth]{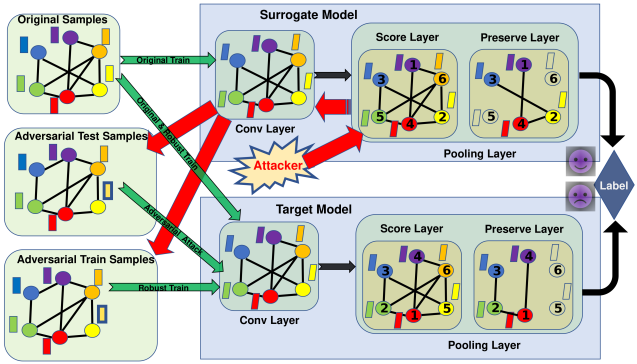}
    \end{minipage}
    \caption{Adversarial Attack on the HGP Model. Red arrow represents the adversarial samples' generating. Green arrow is the workflow and black arrow is the neural network propagation}
    \label{fig:my_label}
    \vspace{-0.0001in}
\end{figure}
\subsection{Workflow of Adversarial Attack}
We propose an evasion attack to the HGP model setting on the inductive-learning scenario (\textbf{Figure 1}). The workflow of our study can be summarized as follows: 

\textbf{(1) Surrogate Model Attack:} We build up a hierarchical pooling surrogate model and utilize it to generate the adversarial samples. To be specific, we firstly conduct the \textbf{original train}: train and test the surrogate model by using original training and testing data. Based on the rule of inductive-learning, testing data will never share their information in the training process. After the surrogate model gets well-trained, the model parameters are fixed. Then, we conduct the \textbf{adversarial samples generating}: use the well-trained surrogate model and the test data to generate the adversarial test samples. Finally, we conduct the \textbf{surrogate model attack}: use the adversarial test samples to test the surrogate model and get the attack results on surrogate model. 

\textbf{(2) Target Model Attack:} We firstly conduct the \textbf{original train} on several existed state-of-arts HGP models (\textbf{target models}). Then we use the adversarial test samples to attack these target models to show the transferability of the adversarial test samples. A good attack on target model can manifest that the surrogate model have the ability to generate the transferable adversarial samples. \\   
\indent \textbf{(3) Defense Against Adversarial Samples:} We conduct the \textbf{robust train} on target models. The process includes using the surrogate model to generate the adversarial train samples, and then using these adversarial train samples to retrain the target models. We will show the improvement of the robustness of the target models in the experiment section.

\subsection{Surrogate Model}
The pooling layer in many existed models is not trainable.\cite{zhang2019hierarchical,ma2019graph} The score function, for scaling the graph nodes, is obtained by calculating some graph distance \cite{zhang2019hierarchical} or node projection \cite{gao2019graph}. Such settings will lead to problems when we calculate the gradient of $\mathcal{L}_{attack}$ to the graph adjacency matrix $A$ or to the node feature matrix $H$. However, Junhyun Lee et al. \cite{lee2019self} proposed a self-attention graph pooling model which can parameterize the pooling layer and potentially facilitate the gradient computation. Therefore, inspired by Lee's work, we propose a new surrogate model to realize the gradient computation.

Two important characters of the surrogate model are: (1) has the ability to classify the graph well; (2) can efficiently generate the adversarial samples. Therefore, we propose an HGP model with one convolution layer and one pooling layer to be the surrogate model. At the end of pooling layer, there are two linear fully connected layers to output graph classification results. Following the Section 3, we formalize our surrogate model as:
\begin{eqnarray}
&y(A_{0},H_{0})=softmax(sel(H_{1}) \odot tanh(sel(S)) V_{1} V_{2}) \\
&\indent H_{1}(A_{0},H_{0})= \Tilde{D}_{0}^{-\frac{1}{2}} \Tilde{A}_{0} \Tilde{D}_{0}^{-\frac{1}{2}} H_{0} W \\
&\indent S(A_{0},H_{1})=\Tilde{D}_{0}^{-\frac{1}{2}} \Tilde{A}_{0} \Tilde{D}_{0}^{-\frac{1}{2}} H_{1} \theta , 
\end{eqnarray}
where $S$ is the parameterized node score function. $W,\theta,$ and $\{V_{1},V_{2}\}\\ \in V^{linear}$ are the parameters in the convolution and score function and linear layers. $\odot$ is the line-wise scalar multiplication. And $sel(\cdot)$ is the select function to preserve the selected nodes and discard others. i.e. If $node_{i}$ is selected to be discarded, then $sel(H)$ will set $H_{i,:}$ as 0. \\
\indent For a well-trained surrogate model, the input graph data $G_{0}=(A_{0},H_{0})$ propagates through the convolution layer and node features are encoded to a new feature space as $H_{1}$. Then $H_{1}$ and $A_{0}$ propagate into the pooling layer. In the pooling layer, $H_{1}$ and $A_{0}$ first participate to compute the node score function $S$. Then, based on the node score, the select function $sel(\cdot)$ preserves the top K nodes with high scores. Finally, the pooled feature vector will pass through the linear layers to generate a graph classification probability. The loss function of the model can be defined as:
\begin{eqnarray}
\mathcal{L}_{model}(U,A_{0},H_{0}) = -\log(y(A_{0},H_{0})[L]),
\end{eqnarray}
where $U=\{W,\theta,V^{linear}\}$, $L$ and $[\cdot]$ follow the previous definitions. 

\subsection{Generating Adversarial Samples}
We propose a gradient-based method to generate the adversarial samples. The foremost problem to generate the adversarial samples via a gradient based method is to define the attack loss function $\mathcal{L}_{attack}$. From Eqs $(7)\sim(10)$, we find that the $\mathcal{L}_{model}$ is a complicated nonlinear function. Therefore, if we directly use $\mathcal{L}_{model}$ as the attack loss, it will be very tedious to solve the upper-level function in Eq. $(6)$ via the gradient ascent method. Therefore, unlike the existed studies \cite{zugner2019adversarial,zugner2018adversarial}, we design a new $\mathcal{L}_{attack}$ which can avoid computing the gradient of nonlinear function. 
\subsubsection{\textbf{Attack Loss Design}}
The most significant component in the HGP model is the score function. A well-trained model can utilize the score function to select the preserved nodes which aggregate graph's hierarchical information. Hence, if we can generate the adversarial samples to decrease the scores of those preserved nodes lower than those of discarded nodes, the select function $sel(\cdot)$ will preserve wrong nodes and capture the wrong hierarchical information from the graph, which eventually lead to wrong prediction outputs. Therefore, we formalize our $\mathcal{L}_{attack}$ as:
\begin{eqnarray}
\mathcal{L}_{attack} = S(sel(A_{0}),sel(H_{1}))-S(sel(\hat{A_{0}}),sel(\hat{H_{0}})), 
\end{eqnarray}
and the adversarial sample generating process for each graph can be formalized as:

\begin{eqnarray}
&\underset{\hat{A_{0}}\in\delta(A_{0}),\hat{H_{0}}\in\delta(H_{0})}{\mathrm{max}}\mathcal{L}_{attack}
\end{eqnarray}

% \begin{small}
% \begin{eqnarray}
% &\underset{\hat{A_{0}}\in\delta(A_{0}),\hat{H_{0}}\in\delta(H_{0})}{\mathrm{max}}S(sel(A_{0}),sel(H_{1}))-S(sel(\hat{A_{0}}),sel(\hat{H_{0}}))
% \end{eqnarray}
% \end{small}

where $\hat{A_{0}}$ and $\hat{H_{0}}$ is the adjacency matrix and node feature vectors of adversarial samples. $\delta(\cdot)$ is an unnoticeable perturbation restriction when perturbing the original graph, which will be explained below.
\subsubsection{\textbf{Unnoticeable Perturbation Restriction}}
Since both the graph edges and node features can be attacked and the perturbations must be unnoticeable, we design the following restrictions for adjacency matrix and node feature respectively. \\
\textbf{(1) Restriction on Graph Edge Perturbation:} \\
Two restrictions are used to measure the graph edge perturbation. We use the edit-distance \cite{lovasz2012large} as the first measure and then set a budget $\triangle_{1}$ to restrict the edit-distance between $A_{0}$ and $\hat{A_{0}}$. Formally,
\begin{eqnarray}
&\delta_{1}(A_{0},\hat{A_{0}})=\frac{\mid A_{0}-\hat{A_{0}} \mid}{N^{2}} \leq \triangle_{1}
\end{eqnarray}
For the other restriction, we use the 2nd order DELTACON0 graph distance \cite{koutra2013deltacon} to capture the graph differences on the 2nd order level. Then we set another budget $\triangle_{2}$ to restrict the second order DELTACON0 graph distance between $A_{0}$ and $\hat{A_{0}}$. Formally, 
\begin{eqnarray}
\delta_{2}(A_{0},\hat{A_{0}})&=&\sqrt{(S_{0}-\hat{S}_{0})^{2}} \nonumber \\
&=&\sqrt{\sum_{i=1}^{N}\sum_{j=1}^{N}(\sqrt{s_{0,ij}}-\sqrt{\hat{s}_{0,ij}})^{2}}\leq \triangle_{2}  \nonumber \\
S_{0}&\approx& I + \epsilon A_{0} + \epsilon^{2} A_{0},  \nonumber \\
\hat{S}_{0}&\approx& I + \epsilon \hat{A}_{0} + \epsilon^{2} \hat{A}_{0}
\end{eqnarray}
\textbf{(2) Restriction on Node Feature Perturbation:} \\
\noindent An $l_{1}$ norm is used to measure the perturbation on the node feature vectors. We set up a budget $\triangle_{3}$ to restrict the perturbation between $H_{0}$ and $\hat{H}_{0}$. Formally, 
\begin{eqnarray}
&\delta_{3}(H_{0},\hat{H_{0}}) = \mid\mid H_{0}-\hat{H_{0}} \mid\mid_{1} \leq\triangle_{3}
\end{eqnarray}

% \vspace{-0.5em}
\begin{algorithm}[h]
\small
\SetAlgoLined
\SetKwData{U^{*}}{U^{*}}\SetKwData{This}{this}\SetKwData{Up}{up}
\SetKwFunction{Union}{Union}\SetKwFunction{FindCompress}{FindCompress}
\SetKwInOut{Input}{input}\SetKwInOut{Output}{output}
\Input{$G=(A_0,H_0)$, restriction, $\triangle=\{\triangle_{1},\triangle_{2},\triangle_{3}\}$, graph label: $C_{L}$,$K$, $N$(number of Nodes)}
\Output{$\hat{G}=(\hat{A},\hat{H})$}
Train surrogate model: \\
$U^{*}$ $\leftarrow$ \indent $\underset{U}{\mathrm{min}}\mathcal{L}_{model}(U,A,H)$ 
\indent  \tcp*[f]{$U=\{W,\theta,V^{linear}\}$} \\
Fix model and generate adversarial samples \\
\While{$\delta(G,\hat{G})\leq\triangle$}{
\For{Top $K$ nodes}{
\For{$j=1,2,...,N$}{
Compute: $\nabla a_{i,j} = 
\frac{\partial (S(A_{0},H_{0})_{i}-S(A,H)_{i})}{\partial a_{ij}}$ \\
}
Find: $(i,j)$ $\leftarrow$ $\underset{(i,j)}{\mathrm{max}}\nabla a_{i,j}$\\
\If{$a_{i,j}=0\&\nabla a_{i,j}>0$ \textbf{OR} $a_{i,j}=1\&\nabla a_{i,j}<0$}{
Add/Delete $a_{i,j}$  \tcp*[f]{$\nabla a_{i,j}>0,Add;else,Delete$} \\
\lElse{\\
Delete $\nabla a_{i,j}$ and \textbf{Return} Find}
}
Compute: $\nabla h_{i}=\frac{\partial (S(A_{0},H_{0})_{i}-S(A,H)_{i})}{\partial h_{i}}$ \\
$\hat{h}_{i} \leftarrow h_{i} + \nabla h_{i}$
}
}
\caption{Generating Adversarial Samples}{\label{IR}}
\end{algorithm}

% \vspace{-0.5em}

\subsubsection{\textbf{Gradient-Based Pooling Attack Algorithm}}
After the whole graph pooling model getting trained, all the model parameters are fixed. Then, we will generate the adversarial samples by our gradient-based pooling attack algorithm. Noted that $a_{i,j}$ is the element in $A_{0}$ at $(i,j)$ and $h_{i}$ is the $i$-th line of $H$. For the selected top K nodes, we compute the gradient of $\mathcal{L}_{attack}$ to each node feature and edge. i.e. If $node_{i}$ is selected, we will compute $\nabla h_{i} (\mathcal{L}_{attack})$ and $\nabla a_{ij} (\mathcal{L}_{attack})$ for each $j\in\{1,2,..., N\}$. Then we select the largest $\mid\nabla a_{ij}(\mathcal{L}_{attack})\mid$ and add/delete the edge at $a_{i,j}$ (Add the edge if the gradient is positive, otherwise, delete the edge). Also, we will update the $h_{i}$ to $(h_{i}+\nabla h_{i} (\mathcal{L}_{attack}))$. \\
\indent Notice that there are some special cases when perturbing the edge. At the largest gradient position $(i,j)$, if we will delete an edge but $a_{i,j}=0$, or, if we will add an edge but $a_{i,j}=1$. If these special cases happen, we will search for the 2nd largest gradient position. Such a process will continue until the special cases do not exist. We will conduct the above process from the Top 1 node to the Top K node, unless the perturbation restriction breaks. The details of the algorithm is shown in $\textbf{Algorithm 1}$

\subsection{Computation of Attack Gradient}
 In this part, will show how to compute the gradient of $\mathcal{L}_{attack}$ to adjacency matrix $A$(self-looped) and node feature $H$. w.r.t. each selected node. $WLOG$, we assume that $node_{i}$ is selected by the select function and become our attack target. Then the problems can be defined as follows:
\begin{problem}
Let $G_{0}=(A_{0},H_{0})$ be the original graph, compute the \\
$\frac{\partial (S(A_{0},H_{0})_{i}-S(A,H)_{i})}{\partial a_{ij}}$ for $j=\{1,2,...,N\}$. Here $a_{ij}$ is the element in $A$ at $(i,j)$ and $S(\cdot)$ is defined in Eq.(9). 
\end{problem}
For the convenience, we first give some matrix definitions. Let $\overline{D}=\Tilde{D}^{-\frac{1}{2}}$ be the diagonal matrix with each element as $\Tilde{d_{ii}}^{-\frac{1}{2}}$. The assist degree matrix $\check{D}$ is defined as:

\begin{small}
\begin{eqnarray}
\check{D} = 
\left[
\begin{matrix}
 \overline{d}_{11}\\
 \overline{d}_{22}\\
 \vdots\\
 \overline{d}_{NN}\\
\end{matrix}
\right]
\left[
\begin{matrix}
 \overline{d}_{11}\\
 \overline{d}_{22}\\
 \vdots\\
 \overline{d}_{NN}\\
\end{matrix}
\right]^{T}= 
\left[
\begin{matrix}
 \overline{d}_{11}^{2} &\overline{d}_{11}\overline{d}_{22}&\cdots&\overline{d}_{11}\overline{d}_{NN}\\
 \overline{d}_{22}\overline{d}_{11}&\overline{d}_{22}^{2}&\cdots&\overline{d}_{22}\overline{d}_{NN}\\
 \vdots & \vdots & \ddots & \vdots \\
 \overline{d}_{NN}\overline{d}_{11}&\overline{d}_{NN}\overline{d}_{22}&\cdots&\overline{d}_{NN}^{2}\\
\end{matrix}
\right]
\end{eqnarray}
\end{small}

where $(\cdot)^{T}$ is the matrix transpose.

\begin{theorem}
For each $j=\{1,2,...,N\}$, The partial derivative in PROBLEM 1. is: \\
\begin{eqnarray}
\frac{\partial (S(A_{0},H_{0})_{i}-S(A,H)_{i})}{\partial a_{ij}} &=& -\theta^{T}[\overline{d}_{ii}\overline{d}_{jj}(KHW)^{T}+M], \nonumber\\ 
K&=&\check{D}_{line_{j}}\circ A_{line_{j}},  \nonumber \\
M&=&a_{ii}\overline{d}^{2}_{ii}(\overline{d}_{ii}\overline{d}_{jj}H_{line_{j}}W)^{T}  
\end{eqnarray}
where $H\in\mathcal{R}^{N\times D}$, $W\in\mathcal{R}^{D\times F}$ and $\theta \in\mathcal{R}^{F\times 1}$ follow the previous definition, $\circ$ is the element-wise product. 
\end{theorem}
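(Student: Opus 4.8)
The plan is to reduce the stated quantity to a single entrywise differentiation of a bilinear form in the (self-looped) adjacency matrix, and then repackage the result into the claimed matrix expression. First I would observe that $S(A_0,H_0)_i$ is the score on the \emph{original} graph and hence is constant in the perturbed entry $a_{ij}$, so the derivative equals $-\,\partial S(A,H)_i/\partial a_{ij}$ and only the perturbed score matters. Next I would unfold the surrogate score function: combining Eqs.~(8) and (9), the score applies the normalized adjacency twice, $S = B\,H_1\,\theta$ with $H_1 = B\,HW$, so that $S = B^{2}v$, where $B := \overline{D}\,A\,\overline{D}$ is the normalized self-looped adjacency ($A$ in the role of $\Tilde{A}$) and $v := HW\theta \in \mathcal{R}^{N\times 1}$ is constant in $A$. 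The assist matrix $\check{D}$ is introduced precisely so that $B$ factors as a Hadamard product, $B = \check{D}\circ A$, i.e. $B_{pq} = \overline{d}_{pp}\overline{d}_{qq}\,a_{pq}$, which makes the later row extraction transparent.

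The core computation is then mechanical. Writing $S_i = \sum_{p,q} B_{ip}B_{pq}v_q$ and differentiating by the product rule — holding $\overline{D}$ and $v$ fixed, and treating the entries of $A$ as independent so that $\partial B_{pq}/\partial a_{ij} = \overline{d}_{pp}\overline{d}_{qq}\,\delta_{pi}\delta_{qj}$ — collapses the double sum into two contributions. Differentiating the outer factor $B_{ip}$ forces $p=j$ and yields $\overline{d}_{ii}\overline{d}_{jj}\sum_q B_{jq}v_q = \overline{d}_{ii}\overline{d}_{jj}(Bv)_j$; differentiating the inner factor $B_{pq}$ forces $p=i,\,q=j$ and yields $B_{ii}\,\overline{d}_{ii}\overline{d}_{jj}v_j = a_{ii}\overline{d}_{ii}^{2}\,\overline{d}_{ii}\overline{d}_{jj}v_j$, using $B_{ii} = \overline{d}_{ii}^{2}a_{ii}$.

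The final step is to recognize each contribution inside the matrix form of the theorem. For the first term, $(Bv)_j$ is the $j$-th row of $B$ applied to $v$; by the Hadamard factorization that row is $\check{D}_{line_j}\circ A_{line_j}=K$, so $(Bv)_j = K\,HW\theta = \theta^{T}(KHW)^{T}$ and the term becomes $\overline{d}_{ii}\overline{d}_{jj}\,\theta^{T}(KHW)^{T}$. For the second term, $v_j = H_{line_j}W\theta = \theta^{T}(H_{line_j}W)^{T}$, so $a_{ii}\overline{d}_{ii}^{2}\,\overline{d}_{ii}\overline{d}_{jj}v_j = \theta^{T}M$ with $M = a_{ii}\overline{d}_{ii}^{2}(\overline{d}_{ii}\overline{d}_{jj}H_{line_j}W)^{T}$. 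Re-attaching the minus sign from the first step produces exactly $-\theta^{T}[\overline{d}_{ii}\overline{d}_{jj}(KHW)^{T}+M]$.

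I expect the book-keeping, not any deep idea, to be the main obstacle, and it rests on two modeling choices that must be stated explicitly: treating the degree normalization $\overline{D}$ as constant (dropping the $\partial\overline{d}_{pp}/\partial a_{ij}$ terms the exact gradient would carry) and treating the adjacency entries as independent (ignoring the symmetry constraint $a_{ij}=a_{ji}$). Granting these, the only genuinely delicate point is identifying the $j$-th row of the normalized adjacency with $\check{D}_{line_j}\circ A_{line_j}$; this identity is the bridge that converts the index-level sums into the compact matrix statement, and it is the reason the assist matrix $\check{D}$ is defined at all.
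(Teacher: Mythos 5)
Your proposal is correct and follows essentially the same route as the paper's proof: drop the constant term $S(A_0,H_0)_i$, write $S_i$ as row $i$ of the twice-applied normalized adjacency acting on $v=HW\theta$ (using $B=\check{D}\circ A$), apply the product rule to get the two contributions that become the $K$-term and the $M$-term. Your version is in fact tidier than the paper's component-wise expansion in $\theta$, and it has the merit of stating explicitly the two simplifications the paper leaves implicit (holding $\overline{D}$ fixed and treating the entries of $A$ as independent despite symmetry); it also uses the correct row $H_{line_j}$ where the paper's intermediate display mistakenly writes the entries of row $i$.
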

\begin{proof}
In the partial derivative in PROBLEM 1, we find the first term $(S(A_{0},H_{0})_{i}-S(A,H)_{i})$ is a constant which is unrelated to the gradient computing. So we only need to check the second term. Show that:
\begin{eqnarray}
-S(A,H)_{i}=-[(\check{D}_{line_{i}}\circ A_{line_{i}})(A\circ \check{D}HW)]\theta
\end{eqnarray}
Then we can expand $-\frac{\partial(S(A,H)_{i})}{\partial a_{ij}}$ for $\forall j \in \{1,2,...,N\}$ as:
\begin{eqnarray}
-\frac{\partial(S(A,H)_{i})}{\partial a_{ij}}=
-(&\theta_{1}(\overline{d}_{ii}\overline{d}_{jj}[X_{1}]+[Z_{1}]) + \nonumber\\
&\theta_{2}(\overline{d}_{ii}\overline{d}_{jj}[X_{2}]+[Z_{2}]) + \nonumber\\
&\cdots\cdots \nonumber \\
&\theta_{F}(\overline{d}_{ii}\overline{d}_{jj}[X_{F}]+[Z_{F}])).
\end{eqnarray}
where,

\begin{small}
\begin{eqnarray}
\left[
\begin{matrix}
 X_{1} \\
 X_{1} \\
 \vdots \\
 X_{F} \\
\end{matrix}
\right]&=&(
\left[
\begin{matrix}
  a_{j1}\overline{d}_{jj}\overline{d}_{11},\cdots , a_{jj}\overline{d}_{jj}\overline{d}_{jj}, \cdots ,a_{jN}\overline{d}_{jj}\overline{d}_{NN}
\end{matrix}
\right]HW)^{T} \nonumber \\
&=&(\check{D}_{line_{j}} \circ A_{line_{j}}HW)^{T}
\end{eqnarray}
\end{small}

\begin{eqnarray}
\left[
\begin{matrix}
  Z_{1} \\
  Z_{2} \\
  \vdots \\
  Z_{F}
\end{matrix}
\right]&=&
a_{ii}\overline{d}_{ii}^{2}(\overline{d}_{ii}\overline{d}_{jj}
\left[
\begin{matrix}
  h_{i1} h_{i2} \cdots h_{iD}
\end{matrix}
\right]W)^{T} \nonumber \\
&=& a_{ii}\overline{d}_{ii}^{2}(\overline{d}_{ii}\overline{d}_{jj}H_{line_{j}}W)^{T},
\end{eqnarray}
Concluded from Eqs.(19)$\sim$(21), Eq.(17) is proved. 
\end{proof}
 
\begin{problem}
Let $G_{0}=(A_{0},H_{0})$ be the original graph, compute the \\
$\frac{\partial (S(A_{0},H_{0})_{i}-S(A,H)_{i})}{\partial h_{i}}$. Here $h_{i}$ is the $i$-th line of $H$ which represents the feature of $node_{i}$ and $S(\cdot)$ is defined in Eq.(9)
\end{problem}
For the convenience, we define another assist degree matrix $\check{D}^{2}=\check{D}\circ\check{D}$ as: \\
\begin{eqnarray}
\check{D}^{2}=
\left[
\begin{matrix}
 \overline{d}_{11}^{4} &\overline{d}_{11}^{2}\overline{d}_{22}^{2} &\cdots &\overline{d}_{11}^{2}\overline{d}_{NN}^{2} \\
 \overline{d}_{22}^{2}\overline{d}_{11}^{2} &\overline{d}_{22}^{4} &\cdots &\overline{d}_{22}^{2}\overline{d}_{NN}^{2} \\ 
 \vdots &\vdots &\ddots &\vdots \\
 \overline{d}_{NN}^{2}\overline{d}_{11}^{2} &\overline{d}_{NN}^{2}\overline{d}_{22}^{2} &\cdots &\overline{d}_{NN}^{4}\\
\end{matrix}
\right]
\end{eqnarray}

\begin{theorem}
The partial derivative in PROBLEM 2. is: \\

\begin{small}
\begin{eqnarray}
\frac{\partial (S(A_{0},H_{0})_{i}-S(A,H)_{i})}{\partial h_{i}}=-(\check{D}^{2}_{line_{i}}(A_{line_{i}}^{T}\circ A_{col_{i}})\odot W)\theta,
\end{eqnarray}
\end{small}

where $W\in\mathcal{R}^{D\times F}$, $\theta \in\mathcal{R}^{F\times 1}$ and $\circ$ follow the previous definition. 
\end{theorem}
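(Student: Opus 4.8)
The plan is to reuse the closed form for $S(A,H)_i$ already established in the proof of Theorem~1, Eq.~(18), and to differentiate it with respect to the row $h_i$ rather than with respect to an edge entry $a_{ij}$. Introducing $\hat{A}=\check{D}\circ A=\overline{D}A\overline{D}$ for the symmetrically normalized self-looped adjacency, so that $\check{D}_{line_i}\circ A_{line_i}=\hat{A}_{line_i}$ and $A\circ\check{D}=\hat{A}$, Eq.~(18) becomes $S(A,H)_i=\hat{A}_{line_i}\,\hat{A}\,HW\theta=(\hat{A}^{2})_{line_i}HW\theta$. Since the first term $S(A_0,H_0)_i$ of PROBLEM~2 involves only the original, unperturbed features, it is constant under differentiation in $h_i$ and drops out; hence I only need to compute $-\partial S(A,H)_i/\partial h_i$.

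The key observation is that $\hat{A}$, and therefore $\hat{A}^{2}$, depends only on the adjacency and not on $H$ at all. Writing $S(A,H)_i=\sum_{k}(\hat{A}^{2})_{ik}(HW\theta)_k=\sum_{k}(\hat{A}^{2})_{ik}\sum_{d,f}H_{kd}W_{df}\theta_f$, the only summand containing the $i$-th row $h_i=(H_{i1},\dots,H_{iD})$ is the one with $k=i$. Differentiating that single term therefore selects the diagonal coefficient $(\hat{A}^{2})_{ii}$, and I obtain componentwise $\partial S(A,H)_i/\partial H_{id}=(\hat{A}^{2})_{ii}\,(W\theta)_d$, i.e.\ $\partial S(A,H)_i/\partial h_i=(\hat{A}^{2})_{ii}\,W\theta$ as a $D\times 1$ vector.

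It then remains to rewrite the scalar $(\hat{A}^{2})_{ii}$ in the compact matrix form of the statement. Using $(\hat{A}^{2})_{ii}=\sum_{m}\hat{A}_{im}\hat{A}_{mi}=\sum_{m}\check{D}_{im}\check{D}_{mi}\,A_{im}A_{mi}$ together with the symmetry $\check{D}_{im}=\check{D}_{mi}$, so that $\check{D}_{im}\check{D}_{mi}=\check{D}^{2}_{im}$ with $\check{D}^{2}$ the matrix of Eq.~(22), one recognizes $(\hat{A}^{2})_{ii}=\check{D}^{2}_{line_i}\,(A_{line_i}^{T}\circ A_{col_i})$: the inner product of row $i$ of $\check{D}^{2}$ against the Hadamard product of the $i$-th row and $i$-th column of $A$. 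Substituting this scalar back, interpreting $\odot W$ as the line-wise scalar multiplication of $W$ by that scalar, and restoring the minus sign yields exactly Eq.~(23).

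The computation is mostly index bookkeeping, and the step to handle with care is the diagonal selection. Because the convolution $\hat{A}HW$ mixes all rows of $H$ while we differentiate only with respect to row $i$, the outer normalized adjacency $\hat{A}_{line_i}$ must be restricted to its $i$-th entry, which is precisely what collapses the full row $(\hat{A}^{2})_{line_i}$ down to the quadratic diagonal term $(\hat{A}^{2})_{ii}$. I would therefore make the $k=i$ restriction explicit before folding the sum into the $\check{D}^{2}$-notation, to guarantee that no off-diagonal contribution of $\hat{A}^{2}$ is either dropped or spuriously retained; this is exactly the simplification that makes PROBLEM~2 easier than PROBLEM~1, where $a_{ij}$ instead enters $\hat{A}$ in several places at once.
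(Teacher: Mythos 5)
Your proposal is correct and follows essentially the same route as the paper: drop the constant term $S(A_0,H_0)_i$, observe that only the $k=i$ summand of $S(A,H)_i=(\hat{A}^{2})_{line_i}HW\theta$ involves $h_i$, and identify the resulting coefficient $\sum_m \overline{d}_{ii}^{2}\overline{d}_{mm}^{2}a_{im}a_{mi}=(\hat{A}^{2})_{ii}$ with $\check{D}^{2}_{line_i}(A_{line_i}^{T}\circ A_{col_i})$. The paper's proof writes this inner product out directly in one display; your version merely makes the diagonal-selection step explicit.
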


\begin{proof}
In the partial derivative in PROBLEM 2, only the second term $-S(A,H)_{i}$ is related to the gradient computing. So we only need to check the second term and show that:

\begin{small}
\begin{eqnarray}
-\frac{\partial(S(A,H)_{i})}{\partial h_{i}}  
&=&(\left[
\begin{matrix}
 \overline{d}_{ii}^{2}\overline{d}_{11}^{2}  %\overline{d}_{ii}^{2}\overline{d}_{22}^{2} 
 \cdots 
 \overline{d}_{ii}^{2}\overline{d}_{ii}^{2} 
 \cdots 
 \overline{d}_{ii}^{2}\overline{d}_{NN}^{2} 
\end{matrix}
\right]
\left[
\begin{matrix}
a_{i1}a_{1i} \\
%a_{i2}a_{2i} \\
\vdots \\
a_{ii}a_{ii} \\
\vdots \\
a_{iN}a_{Ni} \\
\end{matrix}
\right] \odot W) \theta \nonumber \\
&=&(\check{D}^{2}_{line_{i}}(A_{line_{i}}^{T}\circ A_{col_{i}}) \odot W)
\theta
\end{eqnarray}
\end{small}

Therefore, Eq.(23) is proved. 
\end{proof}

\section{Experiments}
\label{sec_exp}
This section is organized into six parts. (A) data description; (B) adversarial samples generating and surrogate model attack; (C) the transferability of adversarial samples; (D) power analysis of the attack; (E) the feature attack vs. the edge attack; and (F) the robustness and model defending.    

\subsection{Dataset}
Six graph datasets are selected from the well-known Benchmark Data Sets for Graph Kernels\cite{KKMMN2016}. \textbf{DD} dataset contains graphs of protein crystal structures. The graph label indicates if the protein is enzyme or not\cite{dobson2003distinguishing}. \textbf{Mutagenicity} dataset contains graphs of chemistry molecular structure. The graph label indicates the Mutagenicity of the molecular \cite{kazius2005derivation}. \textbf{ER\_MD}, \textbf{BZR} and \textbf{DHFR} contain graphs to represent the chemical bond type. \cite{kriege2012subgraph,sutherland2003spline}. And \textbf{AIDS} dataset contains biological graphs to represent the antiviral character of different biology compounds\cite{riesen2008iam}. Details of dataset are summarized in \textbf{Table 1}.

\begin{table}[h]
\centering
%\scriptsize
  \caption{Dataset Statistics}
  \label{tab:freq}
  \begin{tabular}{lccc}
    \toprule
    Dataset&\# of Graphs&\# of Classes&Avg. \# of Edges\\
    \midrule
    DD & 1178 & 2 & 715.66\\
    Mutagenicity & 4337 & 2 & 30.77\\
    ER\_MD & 446 & 2 & 234.85\\
    DHFR & 467 & 2 & 44.54 \\
    AIDS & 2000 & 2 & 16.2 \\
    BZR  & 405 & 2 & 38.36 \\
  \bottomrule
\end{tabular}
\end{table}

\subsection{Surrogate Model Attack}
\subsubsection{\textbf{Experiment Setting}} 
Since our attack model is set on an inductive learning scenario, we first split each dataset into 80\% training, 10\% validation and 10\% testing subsets. We conduct the attack experiments on our surrogate model in 3 steps. Firstly, we conduct the original train to show that our surrogate model can well classify the original samples in a hierarchical way. To be specific, we train the surrogate model on training data and evaluate the model on validation data after each epoch.
The best trained model is saved and tested on testing data. Secondly, we generate the adversarial test samples from the fixed well trained model based on our attack method. The perturbation restrictions are set as: $\triangle_{1}=0.05$ and $\triangle_{2}=0.25$ ($\epsilon=0.0001$) which will hold the similarity between original and adversarial samples larger than $80\%$. And K is set as 50\%. In this step, we only perturb 5\% nodes and show that the attack performance is quite remarkable. Thirdly, we attack the surrogate model on adversarial test samples.

\subsubsection{\textbf{Surrogate Model Attack}}
Results of the original train and adversarial attack (with 5\% nodes perturbation) on surrogate model are shown in \textbf{Table 2}. The results of the original train show that our proposed surrogate model achieves a good classification performance (comparable to \cite{lee2019self,zhang2019hierarchical}). The results of adversarial attack show that our surrogate model can efficiently generate powerful adversarial samples which can impair the model's classification ability by perturbing a very few edges.  Comparing to the results of original train, the classification accuracies are reduced by 35.49\%, 33.86\%, 9.46\%, 10.91\%, 27.69\%, 9.10\% for DD, Mutagenicity, ER\_MD, DHFR, AIDS and BZR data after attack.

\begin{table}[h]
\centering
%  \scriptsize
  \caption{Original Train v.s. Adversarial Attack}
  \label{tab:freq}
  \begin{tabular}{lcc}
    \toprule
    Dataset & Original Train & Adversarial Attack \\
    \midrule    
    DD & 76.84\% & 49.57\% \\
    Mutagenicity & 74.02\% & 48.96\%  \\
    ER\_MD & 76.09\% & 68.89\%   \\
    DHFR & 71.43\% & 63.64\%   \\
    AIDS & 97.50\% & 70.50\%   \\
    BZR & 82.50\% & 75.00\% \\
   \bottomrule
\end{tabular}
\end{table}

\subsection{Transferability of the Adversarial Samples}
\subsubsection{\textbf{Experiment Setting}}
We adopt two recent state-of-art HGP models, HGP-SL \cite{zhang2019hierarchical} and SAG \cite{lee2019self} as our target models. The training and validation data are used to conduct the original train using these two models and the testing data are used to show the performances of original trains. Then, following the rule of evasion attack, we use the generated adversarial test samples to attack these two target models. Meanwhile, for each dataset, we set up a baseline method as randomly attack 5\% selected nodes. 

\begin{table}
\centering
% \scriptsize
\caption{Transferability of Adv. Samples on Target Models}
\setlength{\tabcolsep}{0.75mm}{
%   \label{tab:freq}
  \begin{tabular}{lcccccc}
    \hline
    \multicolumn{1}{c|}{\textbf{Dataset}} 
    & \multicolumn{3}{|c|}{\textbf{SAG}} 
    &\multicolumn{3}{|c}{\textbf{HGP-SL}} \\
    \hline
    \multicolumn{1}{c|}{\textbf{}} 
    &\multicolumn{1}{|c|}{\text{Orig. Train}} 
    &\multicolumn{1}{|c|}{\text{Attack}}
    &\multicolumn{1}{|c|}{\text{Baseline}}
    &\multicolumn{1}{|c|}{\text{Orig. Train}}
    &\multicolumn{1}{|c|}{\text{Attack}}  
    &\multicolumn{1}{|c}{\text{Baseline}}\\
    \hline
    \multicolumn{1}{l|}{\text{DD}} 
    &\multicolumn{1}{|c}{$75.95\%$}
    &\multicolumn{1}{c}{$55.42\%$}
    &\multicolumn{1}{c|}{$64.41\%$}    
    &\multicolumn{1}{|c}{$79.98\%$}
    &\multicolumn{1}{c}{$57.14\%$}
    &\multicolumn{1}{c}{$66.10\%$}\\
    \multicolumn{1}{l|}{\text{Mutagenicity}} 
    &\multicolumn{1}{|c}{$76.32\%$}
    &\multicolumn{1}{c}{$43.34\%$}
    &\multicolumn{1}{c|}{$66.35\%$}
    &\multicolumn{1}{|c}{$81.61\%$}
    &\multicolumn{1}{c}{$61.84\%$}
    &\multicolumn{1}{c}{$73.33\%$}\\ 
    \multicolumn{1}{l|}{\text{ER\_MD}} 
    &\multicolumn{1}{|c}{$70.56\%$}
    &\multicolumn{1}{c}{$47.80\%$}
    &\multicolumn{1}{c|}{$60.00\%$}    
    &\multicolumn{1}{|c}{$78.26\%$}
    &\multicolumn{1}{c}{$50.00\%$} 
    &\multicolumn{1}{c}{$62.22\%$}\\
    \multicolumn{1}{l|}{\text{DHFR}} 
    &\multicolumn{1}{|c}{$72.73\%$}
    &\multicolumn{1}{c}{$63.64\%$}
    &\multicolumn{1}{c|}{$68.09\%$}
    &\multicolumn{1}{|c}{$75.32\%$}
    &\multicolumn{1}{c}{$63.64\%$}
    &\multicolumn{1}{c}{$68.09\%$}\\
    \multicolumn{1}{l|}{\text{AIDS}} 
    &\multicolumn{1}{|c}{$95.50\%$}
    &\multicolumn{1}{c}{$89.00\%$}
    &\multicolumn{1}{c|}{$93\%$}
    &\multicolumn{1}{|c}{$98.50\%$}
    &\multicolumn{1}{c}{$78.50\%$}
    &\multicolumn{1}{c}{$92.50\%$}\\ 
    \multicolumn{1}{l|}{\text{BZR}} 
    &\multicolumn{1}{|c}{$87.50\%$}
    &\multicolumn{1}{c}{$85.00\%$}
    &\multicolumn{1}{c|}{$85.00\%$}
    &\multicolumn{1}{|c}{$90.00\%$}
    &\multicolumn{1}{c}{$82.50\%$}
    &\multicolumn{1}{c}{$87.50\%$}\\ 
    \bottomrule
\end{tabular}}
\end{table}

\subsubsection{\textbf{Transferability of Adversarial Samples}}
The results of the original train and adversarial attack on HGP-SL and SAG models are shown in \textbf{Table 3}. We reproduce the previous work and obtain a similar test accuracy in the original train. The attack results manifest that the adversarial samples are transferable to attack different HGP models. This is because the attack method successfully perturb the very few nodes which support the hierarchical structure of the graph. In other words, the adversarial samples represent different latent and intrinsic hierarchical structure information from the original samples although they look very similar. Thus, no matter what HGP model will capture different hierarchical structure information from adversarial or original data, thereby output a total different classification.
% Therefore, the hierarchical structure of adversarial samples captured by any well-trained hierarchical pooling model is distinct from those derived from the original samples, which means the adversarial samples can attack any hierarchical graph pooling network.  

\subsection{Attack Power Analysis}
\subsubsection{\textbf{Experiment Setting}}Attack Power is quantified by the percentages of perturbed edges. We generate the adversarial test samples by perturbing different percentages of edges. Consider that too many edges perturbations will break the unnoticeable restriction, therefore we set the max perturbation on each graph as perturbing 25\% of edges. For ER\_MD, and DD dataset, we cannot reach 25\% edge perturbation because even if we attack all top K nodes, the perturbed edges percentage is still less than 25\%. For these two dataset, we set the max perturbation as perturbing all selected top K nodes. All the above processes are conducted under the unnoticeable restriction.

\begin{figure}[h]
    % \small
    \vskip 0.001in
    \centering
    \subfigure[]{
    \centering
    \begin{minipage}[t]{0.9\linewidth}
    \centering
    \includegraphics[width=1.0\linewidth]{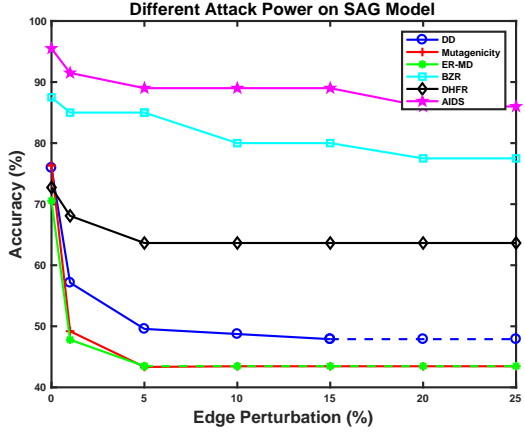}
    \end{minipage}
    }
    \subfigure[]{
    \centering
    \begin{minipage}[t]{0.9\linewidth}
    \centering
    \includegraphics[width=1\linewidth]{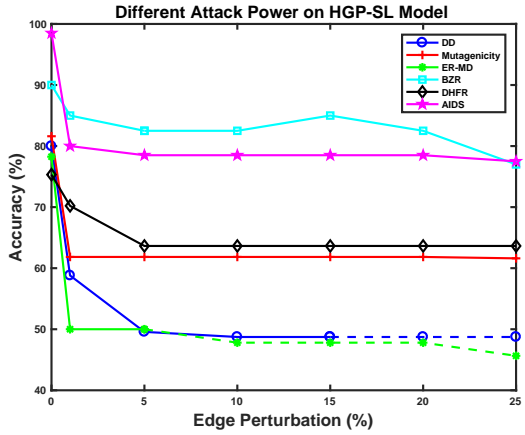}
    \end{minipage}
    }
    \caption{(a) shows the adversarial attack on SAG model under different attack strengths. (b) shows the adversarial attack on HGP-SL model under different attack strengths. Horizontal axis is edge perturbation percentage and vertical axis is classification accuracy}
    \label{fig:my_label}
    \vskip -0.001in
\end{figure}

\subsubsection{\textbf{Attack Power Analysis}}
The tendency of attack power is shown in the \textbf{Figure 2}. On the one hand, \textbf{Figure 2} shows that the adversarial attacks on both models become more powerful with the increase of the perturbation edges or more attack strength. On the other hand, it also shows that the attack power rapidly increases at the beginning of the attack when around 5\% edges are perturbed. Then, the attack power hardly increases whereas more edges are perturbed. This is because, at the beginning of the attack, the attack model perturbs very few important edges which hold the graph hierarchical structure. After these important nodes are attacked, other nodes are so trivial that have little contribution to holding the hierarchical structure of the graph. Thereby the perturbations on these trivial nodes will not enhance the attack consequences. Also, this results demonstrate that a small number of nodes contains the key information on the hierarchical structure of the whole graph. 

\subsection{Feature Attack v.s. Edge Attack}
\subsubsection{\textbf{Experiment Setting}}
Two dataset AIDS and BZR which include specific node features are selected to conduct this experiment. When generating the adversarial samples on node features, we set the unnoticeable restriction $\triangle_{3}=0.05$. We first conduct the feature attack on two target models using perturbed features and original edges. Then we combine the features attack and edges attack together.

\begin{figure}[h]
    % \small
    \vskip 0.001in
    \raggedright
    \subfigure[]{
    \begin{minipage}[t]{1.0\linewidth}
    \centering
    \includegraphics[width=3.2in]{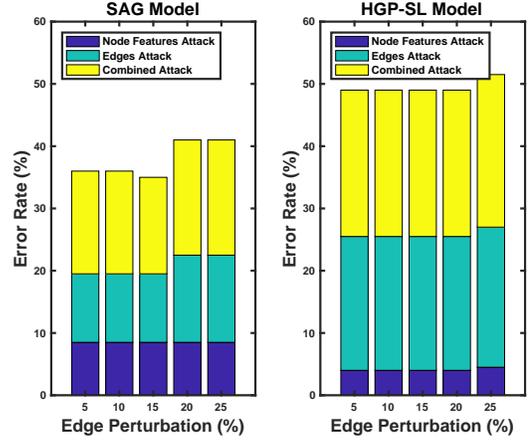}
    \end{minipage}
    }
    \subfigure[]{
    \begin{minipage}[t]{1.0\linewidth}
    \centering
    \includegraphics[width=3.2in]{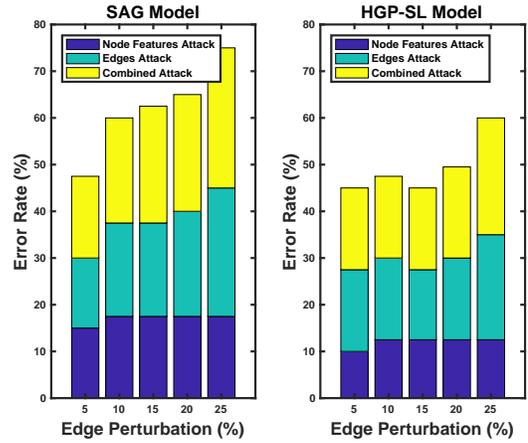}
    \end{minipage}
    }
    \caption{Node Feature Attack v.s. Edge Attack v.s. Combined Attack on AIDS (a) and BZR dataset (b). Large error rate represents powerful attack. Horizontal axis is edge perturbation percentage and vertical axis is classification error rate}
    \label{fig:my_label}
\end{figure}
\vskip -0.001in

\subsubsection{\textbf{Feature Attack v.s. Edges Attack}}
\textbf{Figure 3.} shows that both node feature attack and edges attack can reduce the performance of the target models and edges attack is more powerful than the node feature attack. This is because the perturbation on edge is discretely adding/deleting the edge, which will produce more errors than feature perturbations. The combined attack shows the most powerful attack on both target models across different dataset.

\subsection{Robust Train}
The goal of robust train is to improve the robustness of the target models to the adversarial samples with a small performance sacrifice.
\subsubsection{\textbf{Experiment Setting}}
 Firstly, we generate the adversarial training/validation samples. Then we mix the original and adversarial training data together to re-train the target models. The training process follows the original train in Section \textbf{5.2}. After the models are re-trained, we use adversarial test data to test the robustness of the re-trained model (\textbf{Robust Test}). 
We conduct the robust train based on DD and Mutagenicity dataset and retrain the model by using adversarial training data generated under different attack power.

\begin{figure}[h]
    \vskip 0.001in
    \raggedright
    \subfigure[]{
    \centering
    \begin{minipage}[t]{1.0\linewidth}
    \centering
    \includegraphics[width=3.2in]{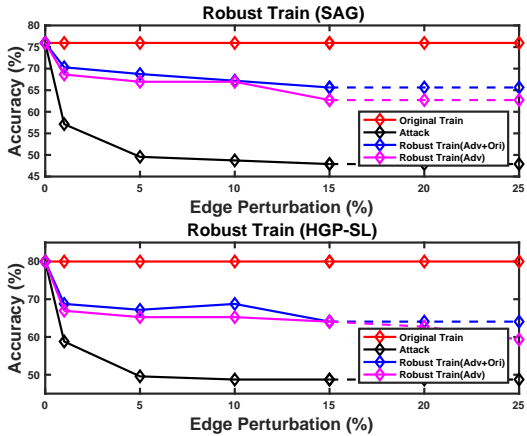}
    \end{minipage}
    }
    \subfigure[]{
    \centering
    \begin{minipage}[t]{1.0\linewidth}
    \centering
    \includegraphics[width=3.2in]{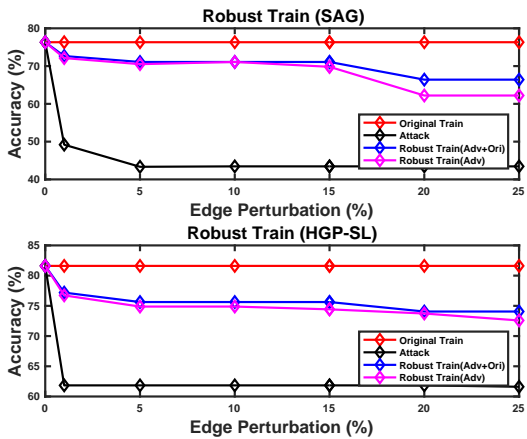}
    \end{minipage}
    }
    \caption{Show the robust train results on (a) DD dataset and (b) Mutagenicity dataset across two target models. Horizontal axis is edge perturbation percentage and vertical axis is classification accuracy}
    \label{fig:my_label}
    \vskip -0.001in
\end{figure}

\subsubsection{\textbf{Robust Train Analysis}} \textbf{Figure 4} shows the robust test on mixed (\textbf{Original+Adversarial}) samples as well as on \textbf{Adversarial} samples. The performance of the robust-trained models on the adversarial samples (pink line) shows a significant robustness improvement in compared with the performance of the original models on adversarial samples (black line). Moreover, the robust test on mixed samples (blue line) indicates the re-trained models has very few performance budget comparing to original models(red line).

\section{Evaluation}
\label{sec_eval}
The basic idea for our attack model is generating adversarial samples to attack the hierarchical structure of the original graphs by making an "unnoticeable" perturbation. Therefore, we firstly evaluate the perturbation between adversarial and original samples. Then, we show how the hierarchical structure is affected under the attack. To show the perturbation between original and adversarial samples, we compute the node degree distribution for each original and adversarial samples. Then we calculate the mean Kullback-Leibler (KL) divergence \cite{kullback1987letter} between each pairs of node degree distributions (\textbf{Table 4}). The lower value of mean KL divergence indicates the smaller difference between original and adversarial samples.

\begin{table}[h]
\centering
%   \scriptsize
  \caption{Evaluation}
  \label{tab:freq}
  \begin{tabular}{lccc}
    \toprule
    Dataset& KL divergence(std.)& CL(p-value) & GRC(p-value)\\
    \midrule
    DD           & 7.60e-3(3.49e-2)   & 3.71e-6  & 1.77e-5 \\
    Mutagenicity & 5.11e-2(6.11e-2)  & 2.53e-9  & 1.50e-3 \\
    ER\_MD       & 1.05e-6(3.02e-9)& 1.11e-3  & 6.00e-4 \\
    DHFR         & 4.99e-2(2.10e-2)   & 4.41e-10 & 8.27e-20\\
    AIDS         & 4.64e-2(1.40e-1)  & 3.11e-3  & 7.01e-6 \\
    BZR          & 7.68e-2(2.09e-2)  & 4.95e-10 & 3.97e-8 \\
  \bottomrule
\end{tabular}
\end{table}

\begin{figure}[h]
    \small
    \vskip 0.001in
    \centering
    \subfigure[]{
    \begin{minipage}[t]{0.45\linewidth}
    \centering
    \includegraphics[width=1.8in]{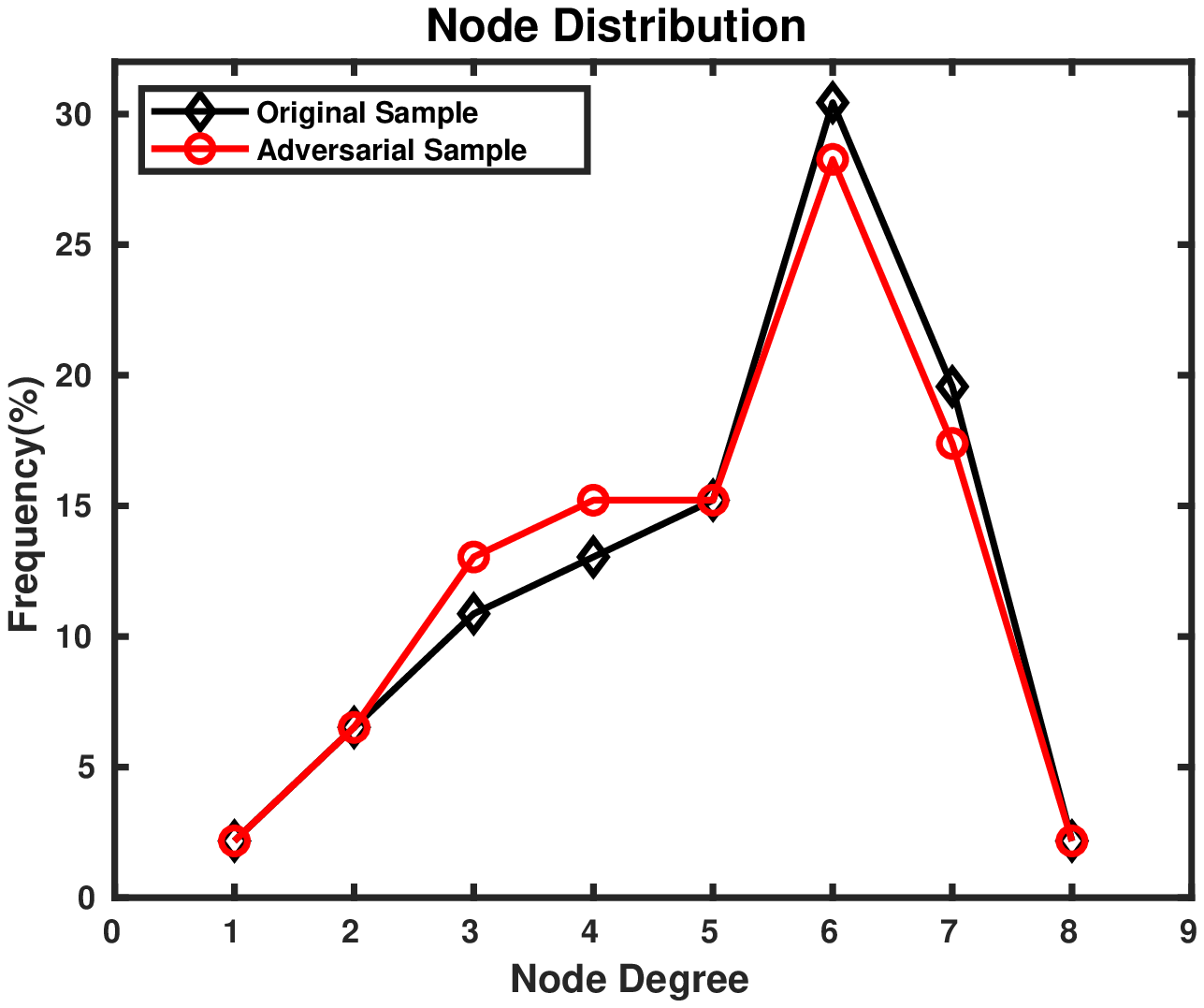}
    \end{minipage}
    }
    \subfigure[]{
    \centering
    \begin{minipage}[t]{0.45\linewidth}
    \centering
    \includegraphics[width=1.0\linewidth]{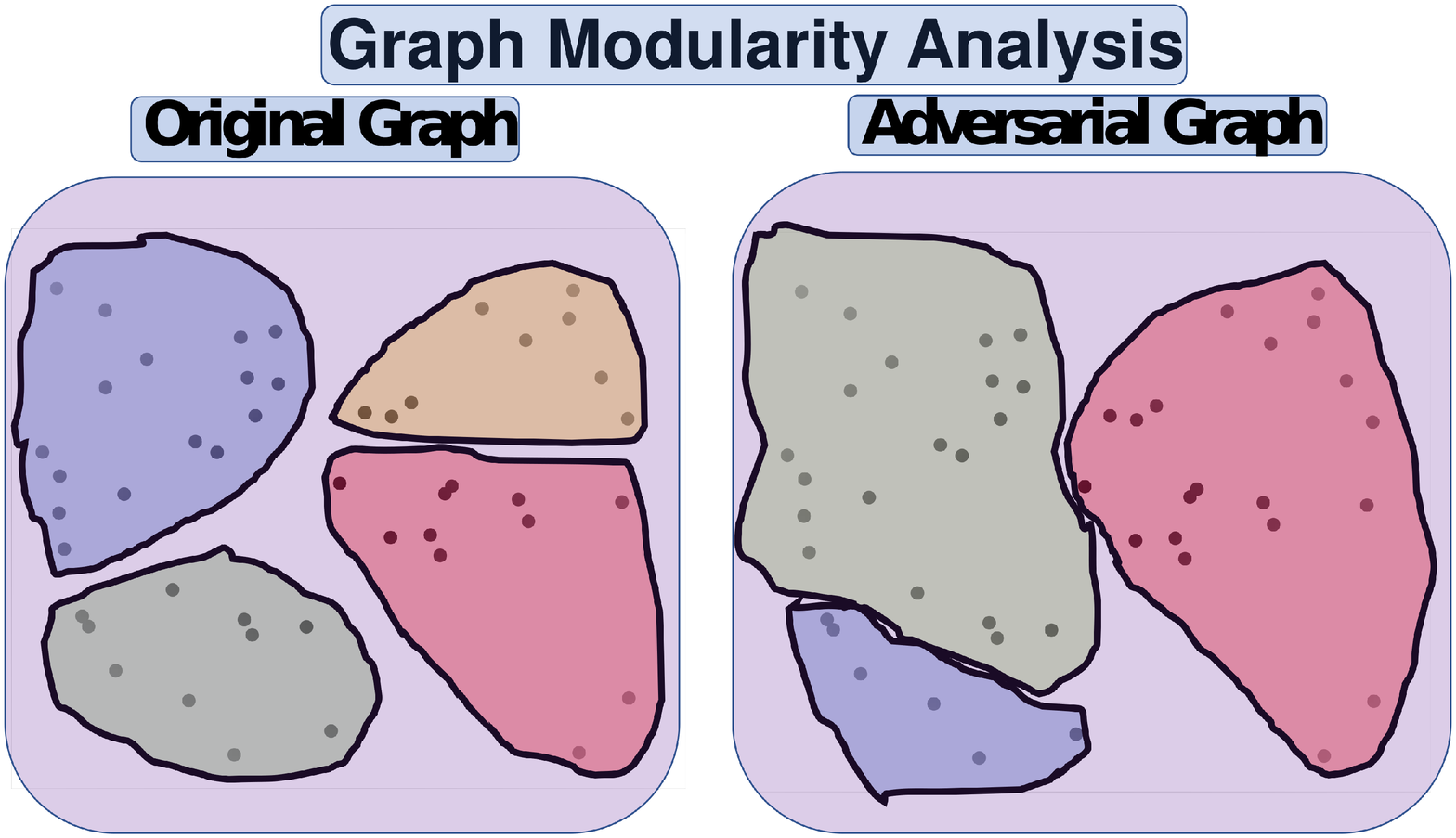}
    \end{minipage}
    }
    \caption{(a). Original sample's node distribution v.s. Adversarial sample's node distribution. horizontal coordinate is the node degree and vertical coordinate is frequency (b). Original sample's modularity v.s. Adversarial sample's modularity.}
    \label{fig:my_label}
    \vskip -0.001in
\end{figure}

\indent In order to evaluate the attack on the graph hierarchical structure, two popular measures, Global-Reaching-Centrality (GRC) \cite{mones2012hierarchy} and community-louvain (CL) \cite{blondel2008fast}, are extracted for each original and adversarial sample. Then we conduct a student T test for each measure between the original sample and adversarial sample. All the p-values (\textbf{Table 4}) are less than 0.05, indicating a significant difference in the hierarchical structure between the original and adversarial samples. \textbf{Figure 5} demonstrates an example (a DD data) of node degree distribution and hierarchical structure for both original and adversarial samples.

\section{Conclusion}
\label{sec_conclusion}
In this paper, we proposed a new adversarial attack framework to evaluate the robustness of the HGP models in graph classification tasks. Also, we conduct the robust train to help these graph pooling models to improve their robustness under the attack of the adversarial samples. However, the HGP model in our study is only one type of GCN-based methods. There are many other types of whole graph embedding methods whose robustness are still not be verified. One of our future work will focus on some more generalized adversarial attack strategies to evaluate the robustness of other graph classification models (such as graph kernel models, topology methods, etc.). Moreover, some of the questions in the task of graph adversarial attack are not well defined. For example, unlike the adversarial attack on image tasks, there is hardly any uniformed criterion to measure the "unnoticeable" perturbation between the original and perturbed graph. Therefore, it is also necessary to investigate some more convinced criterion to quantify 'unnoticeable perturbations. Lastly, the discrete optimization method should always be considered when generating the adversarial samples. How to build up a better optimization algorithm to deal with the discrete problem (e.g. continuous relaxation) will also be included in our future work.

\bibliographystyle{IEEEtran}

%\bibliography{IEEEabrv,main}

% Generated by IEEEtran.bst, version: 1.12 (2007/01/11)

\end{document}